\pdfoutput=1
\documentclass[11pt]{article}
\usepackage[final]{NLPAICS2026}

\usepackage{times}
\usepackage{latexsym}
\usepackage[T1]{fontenc}
\usepackage[utf8]{inputenc}
\usepackage{microtype}
\usepackage{inconsolata}
\usepackage{times}
\usepackage{latexsym}
\usepackage{amsmath}
\usepackage{amssymb}
\usepackage{amsthm}
\theoremstyle{plain}
\usepackage{hyperref}
\usepackage{natbib}
\newtheorem{theorem}{Theorem}
\usepackage{float}
\usepackage{multirow}
\usepackage{listings}
\usepackage{xcolor}
\theoremstyle{definition}

\usepackage{graphicx}

\definecolor{bg}{RGB}{250,250,250}
\definecolor{tx}{RGB}{0,0,0}
\definecolor{kw}{RGB}{0,54,167}
\definecolor{st}{RGB}{0,124,22}
\definecolor{cm}{RGB}{63,127,95}
\definecolor{rl}{RGB}{135,135,135}
\definecolor{dc}{RGB}{158,142,26}

\lstdefinestyle{python}{
    language=Python,
    backgroundcolor=\color{bg},
    basicstyle=\small\ttfamily\color{tx},
    keywordstyle=\bfseries\color{kw},
    stringstyle=\color{st},
    commentstyle=\itshape\color{cm},
    numbers=left,
    numberstyle=\scriptsize\color{rl},
    numbersep=5pt,
    showstringspaces=false,
    tabsize=4,
    breaklines=true,
    frame=single,
    rulecolor=\color{rl},
    morekeywords={match, case},
    moredelim=[l][\color{dc}]{@}
}

\title{CSULoRA: Closest Safe Update Low-Rank Adaptation}

\newcommand{\email}{\texttt{\href{mailto:oleksandr.marchenko.002@student.uni.lu}{oleksandr.marchenko.002@student.uni.lu}}}
\author{
  Oleksandr Marchenko Breneur \quad Adelaide Danilov \quad Aria Nourbakhsh \quad Salima Lamsiyah\\
  Department of Computer Science, University of Luxembourg\\
  Esch-sur-Alzette, Luxembourg\\
  Correspondence to: \email
}

\begin{document}
\maketitle

\begin{abstract}
Low-rank adaptation has become a standard method for parameter-efficient fine-tuning of large language models, but even small amounts of unsafe or adversarial fine-tuning data can substantially weaken the safety behavior of aligned models. Existing safety-preserving LoRA methods often rely on hard interventions such as projection, pruning, thresholding, or additional training objectives. While these methods can suppress unsafe update directions, they may also remove task-relevant information or require extra tuning. We introduce CSULoRA, a post-hoc method for correcting trained LoRA adapters through closest safe update estimation. CSULoRA estimates a safety-aligned subspace from the weight displacement between a safety-aligned model and its corresponding base checkpoint. It then decomposes each LoRA update into fully aligned, partially aligned, and off-subspace components. Instead of discarding components outside the estimated safety subspace, CSULoRA solves a closed-form penalized minimum-change problem that preserves the fully aligned component while smoothly attenuating potentially unsafe directions according to their relative energy. In adversarial fine-tuning experiments, CSULoRA substantially reduces attack success rate while preserving most of the utility gains obtained from standard LoRA fine-tuning\footnote{\url{https://github.com/Oleksandr-MB/NLPAICS2026_CSULoRA}}.
\end{abstract}

\section{Introduction}

Low-rank adaptation (LoRA) is widely used for parameter-efficient fine-tuning because it adapts large language models through small trainable low-rank updates while keeping the base model frozen~\cite{hu2022lora}. However, recent work shows that fine-tuning aligned models can weaken safety behavior and increase compliance with harmful prompts, even when the fine-tuning data is small or not explicitly intended to be adversarial~\cite{yang2023shadow, qi2024benignftsafetydegradation, bianchi2024safety}. This motivates post-hoc methods that modify trained LoRA adapters to reduce safety degradation without retraining the full model.

Existing safety-preserving LoRA and post-hoc safety restoration methods include projection-based correction~\cite{hsu2024safe}, pruning-based correction~\cite{ao2025safe}, safety-module-based adaptation~\cite{li2025salora}, Fisher- or geometry-guided regularization~\cite{das2025alignguard}, task-arithmetic restoration~\cite{bhardwaj2024language}, and layer-wise merging~\cite{djuhera2025safemerge} or delta correction~\cite{lu2025safedelta} approaches. While effective in some settings, these methods may discard task-relevant information or require additional training, pruning decisions, thresholding, regularization objectives, or external safety-vector construction.

\begin{figure*}[t!]
    \centering
    \includegraphics[width=1\linewidth]{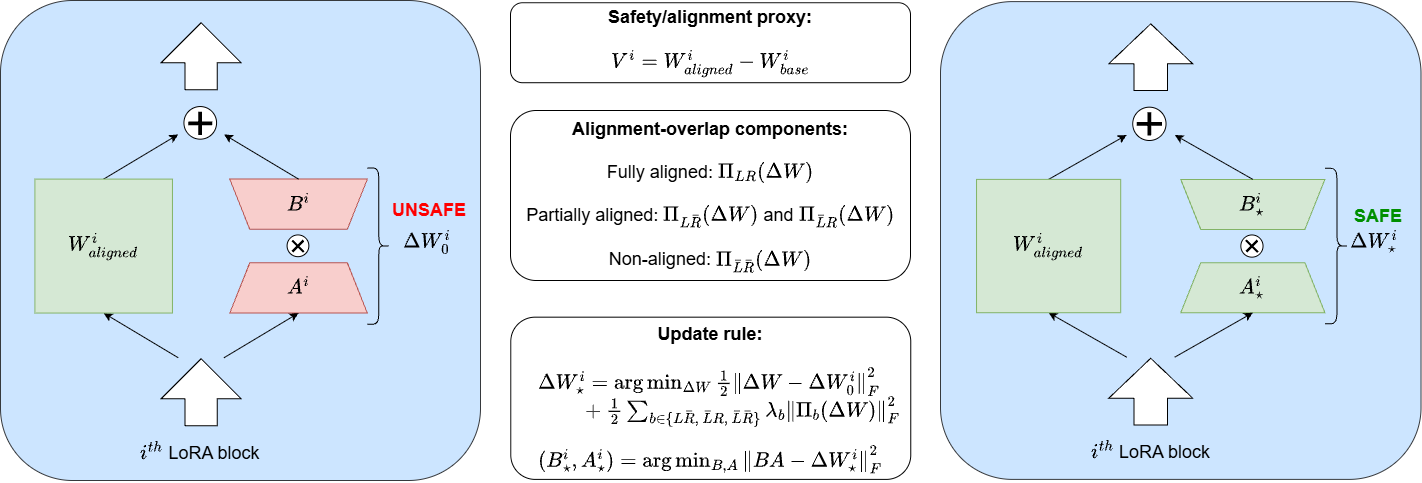}
    \caption{Overview of CSULoRA. Given an aligned model and its corresponding base checkpoint, CSULoRA first estimates an alignment displacement $V = W_{\mathrm{aligned}} - W_{\mathrm{base}}$. For each trained LoRA update $\Delta W = BA$, it constructs left and right alignment-aware projectors from $V$, decomposes $\Delta W$ into fully aligned, partially aligned, and non-aligned components, and solves an optimization problem that preserves the fully aligned component while softly penalizing energy in the remaining blocks. The corrected update $\Delta W^{\star}$ is then refactored back into LoRA matrices $B^{\star}A^{\star}$, preserving the original adapter structure.}
    \label{fig:csulora}
\end{figure*}

We introduce CSULoRA, a post-hoc closest safe update method for LoRA adapters. CSULoRA decomposes each trained adapter update into components that differ in their overlap with an estimated safety-aligned subspace. Instead of removing the off-subspace component, it solves a penalized minimum-change optimization problem that preserves the fully aligned component exactly while smoothly attenuating less alignment-consistent directions. CSULoRA requires no retraining, no additional trainable parameters, and no validation-based threshold selection. Given a trained LoRA adapter and a base/aligned model pair, it deterministically computes a corrected adapter in closed form.

Our experiments study adversarial fine-tuning, where benign constrained instruction-following data is mixed with a small fraction of unsafe examples. Compared with standard LoRA and safety-preserving baselines, CSULoRA substantially lowers attack success rate while preserving most of the utility improvement from LoRA fine-tuning and maintaining general capability.

\section{Related Work}

\paragraph{Parameter-efficient adaptation of language models.}
Parameter-efficient fine-tuning has become a standard strategy for adapting large language models without updating all model parameters. Early adapter-based methods insert small trainable modules into frozen pretrained networks, reducing task-specific parameter cost while preserving most of the benefits of full fine-tuning~\cite{houlsby2019parameter}. LoRA further simplifies this paradigm by injecting low-rank trainable matrices into existing weight layers, keeping the backbone frozen and adding no inference-time architectural depth~\cite{hu2022lora}. Because LoRA offers a favorable trade-off between adaptation quality and computational cost, it has become a widely used mechanism for customizing instruction-tuned LLMs. However, the same flexibility also creates a safety risk: small low-rank updates can substantially alter model behavior, including the refusal and safety patterns learned during alignment.

\paragraph{Safety degradation under fine-tuning.}
Recent work has shown that the safety behavior of aligned LLMs is fragile under downstream fine-tuning. Shadow Alignment demonstrates that a small number of malicious examples can subvert safety-aligned models while largely preserving their general helpfulness~\cite{yang2023shadow}. Similarly, fine-tuning aligned language models on adversarial or even benign user datasets can weaken safety guardrails and increase harmful compliance~\cite{qi2024benignftsafetydegradation}. These findings motivate methods that protect alignment during or after adaptation, especially in settings where users fine-tune open or API-accessible models. Our work follows this line of research but focuses specifically on post-hoc correction of trained LoRA adapters, where the objective is to recover safety without retraining the full model or discarding the utility gained through adaptation.

\paragraph{Safety-preserving LoRA adaptation.}
Several recent methods modify LoRA training or post-processing to reduce safety degradation. SafeLoRA projects selected LoRA weights onto a safety-aligned subspace estimated from the difference between base and aligned checkpoints~\cite{hsu2024safe}. This makes it closely related to CSULoRA, since both use the base--aligned displacement as a proxy for alignment-relevant directions. However, SafeLoRA applies a hard projection, whereas CSULoRA decomposes each update into alignment-overlap blocks and applies a closed-form soft attenuation rule. SPLoRA instead identifies and prunes LoRA layers that are likely to harm safety alignment using a distance-guided criterion~\cite{ao2025safe}. SaLoRA introduces a fixed safety module and task-specific initialization to preserve safety during adaptation~\cite{li2025salora}. AlignGuard-LoRA formulates safety-preserving fine-tuning as a regularized adaptation problem that constrains alignment drift during training~\cite{das2025alignguard}. These methods show that LoRA updates contain safety-relevant structure, but they often rely on hard projection, pruning, additional safety modules, regularization losses, or method-specific hyperparameters. In contrast, CSULoRA is a deterministic post-hoc adapter surgery method: it requires no additional training and preserves the original LoRA rank and adapter structure.

\paragraph{Post-hoc safety restoration and update-space correction.}
A complementary line of work restores safety after fine-tuning by modifying model deltas or merging model weights. RESTA uses task arithmetic to add a safety vector back into a compromised model~\cite{bhardwaj2024language}. SafeDelta estimates safety degradation in the fine-tuning delta and applies safety-aware post-training correction to preserve utility while limiting safety loss~\cite{lu2025safedelta}. SafeMERGE selectively merges layers from fine-tuned and safety-aligned models based on layer-wise deviation from safe behavior~\cite{djuhera2025safemerge}. These methods demonstrate the effectiveness of post-hoc correction, but they typically operate at the model-delta or layer-merging level. CSULoRA differs by operating directly on trained LoRA adapter matrices and by deriving a minimum-change closed-form update that attenuates less alignment-consistent components rather than replacing or merging full layers.

\paragraph{Projection and subspace methods for preserving model behavior.}
CSULoRA is also related to geometric approaches that constrain update directions in parameter space. OPLoRA uses double-sided orthogonal projections to prevent catastrophic forgetting by restricting LoRA updates away from dominant singular directions of the frozen weights~\cite{xiong2025oplora}. CSULoRA adopts a related double-sided projection perspective, but with a different goal and subspace definition: instead of protecting pretrained knowledge from forgetting, it estimates alignment-relevant input and output subspaces from the base--aligned displacement and softly penalizes LoRA components outside these subspaces. This makes CSULoRA a safety-oriented extension of projection-based adapter correction, positioned between hard projection methods and broader post-hoc safety restoration techniques.

\section{Proposed Approach}

We propose CSULoRA, a post-hoc modification method for trained LoRA adapters that performs safety-oriented adapter surgery without additional training. Given a LoRA adapter trained on top of a safety-aligned model, CSULoRA rewrites each layer-wise update by solving a minimum-change optimization problem. In contrast to hard projection methods, which may discard the entire off-subspace component of an update, CSULoRA preserves the component lying fully inside an estimated safety-aligned subspace and softly attenuates the remaining components according to their relative energy (\autoref{fig:csulora}).

Let the original LoRA update for layer $i$ be
$$\Delta W^i_0 = B^i A^i,$$
where $A^i \in \mathbb{R}^{r \times d_{\mathrm{in}}}$ and $B^i \in \mathbb{R}^{d_{\mathrm{out}} \times r}$. To estimate safety-aligned directions for this layer, we use the weight displacement between a safety-aligned checkpoint and its corresponding pre-alignment base checkpoint:
$$V^i = W^i_{\mathrm{aligned}} - W^i_{\mathrm{base}}.$$
Here, $W_{\mathrm{aligned}}$ denotes the safety-aligned instruction-tuned checkpoint~\cite{rafailov2023direct, ouyang2022making}, while $W_{\mathrm{base}}$ denotes the corresponding pre-alignment base model checkpoint. The LoRA adapter itself is applied on top of $W_{\mathrm{aligned}}$; the base checkpoint is used only to estimate the alignment-induced displacement $V^i$. Following projection-based safety-preserving LoRA methods~\cite{hsu2024safe, ao2025safe}, we treat the dominant subspaces of $V^i$ as practical proxies for alignment-relevant directions.

Because a weight matrix maps from an input space to an output space, we estimate alignment-relevant subspaces on both sides of the update. This follows the double-sided projection perspective of OPLoRA, where left and right projectors are used to control how LoRA updates interact with the singular structure of a weight matrix~\cite{xiong2025oplora}. In CSULoRA, however, the projectors are constructed from the alignment delta $V^i$. The dominant column space of $V^i$ defines an output-space basis $U_L^i$, while the dominant row space of $V^i$ defines an input-space basis $U_R^i$. These bases induce the orthogonal projectors
$$P_L^i = U_L^i (U_L^i)^\top, \qquad P_R^i = U_R^i (U_R^i)^\top.$$

To reduce computational cost, we approximate these dominant subspaces using a randomized low-rank range finder with power iterations~\cite{halko2011finding, tropp2023randomized}. The effective rank is chosen adaptively by retaining enough singular-value energy to explain $95\%$ of the alignment-displacement energy.

Given the projectors $P_L^i$ and $P_R^i$, we decompose the original LoRA update $\Delta W^i_0$ into four projection blocks:
\begin{gather*}
\Delta W^i_{LR} = P_L^i \Delta W^i_0 P_R^i,\\
\Delta W^i_{L\bar R} = P_L^i \Delta W^i_0 - \Delta W^i_{LR},\\
\Delta W^i_{\bar L R} = \Delta W^i_0 P_R^i - \Delta W^i_{LR},\\
\Delta W^i_{\bar L\bar R} = \Delta W^i_0 - P_L^i \Delta W^i_0 - \Delta W^i_0 P_R^i + \Delta W^i_{LR}.
\end{gather*}

The block $\Delta W^i_{LR}$ lies in both the aligned input and output subspaces, and is therefore treated as the most alignment-consistent component. The mixed blocks $\Delta W^i_{L\bar R}$ and $\Delta W^i_{\bar L R}$ lie in only one of the two aligned subspaces, while $\Delta W^i_{\bar L\bar R}$ lies outside both. 

For clarity define a set of all block names: $$\mathcal{B} = \{LR,\, L\bar R,\, \bar L R,\, \bar L\bar R\}$$

The core of CSULoRA is a penalized optimization problem. We look for an optimal update $\Delta W^i_\star$ that remains close to the original LoRA update while penalizing energy in the partially aligned and non-aligned blocks:
\begin{align*}
\Delta W^i_\star 
    &= \arg\min_{\Delta W} \frac{1}{2} \left\|\Delta W - \Delta W^i_0\right\|_F^2 \\
    &+ \frac{1}{2} \sum_{b \in \mathcal{B} \setminus \{LR\}} \lambda_b \left\| \Pi_b(\Delta W) \right\|_F^2,
\end{align*}
where $\Pi_b(\cdot)$ denotes the projection onto block $b$. The fully aligned block is left unpenalized. Since the four blocks are induced by orthogonal projectors, the objective decomposes blockwise and admits the closed-form solution (See the derivation in the \autoref{sec:appendix-A})
$$\Delta W^i_\star = \Delta W^i_{LR} + \sum_{b \in \mathcal{B} \setminus \{LR\}} \gamma_b\Delta W^i_b$$
with $\gamma_b = (1+\lambda_b)^{-1}.$

To avoid relying on tunable hyperparameters, we compute the penalty terms adaptively as the ratio of relative Frobenius energies of the projection blocks: i.e., for each block define
$$E_{b} = \|\Delta W^i_{b}\|_F^2,$$ 
We define the ``reference'' energy 
$$E_{\mathrm{ref}} = E_{LR} + \beta \sum_{b \in \mathcal{B}} E_b,$$ 
where $\beta > 0$ is a small numerical constant (in our case we use $\beta = 0.05$), that prevents overly aggressive shrinkage when the fully aligned component is small by smoothing it. Then for each non-fully-aligned block $b \in \mathcal{B} \setminus \{LR\}$, we compute the penalty term $\lambda_b$ as: 
$$\lambda_b = \frac{E_b}{E_{\mathrm{ref}}}.$$

The reference energy $E_{\mathrm{ref}}$ uses the fully aligned block energy $E_{LR}$ as the natural scale for alignment-consistent update magnitude. The smoothing term $\beta \sum_{b \in \mathcal{B}} E_b$ prevents the denominator from becoming too small when $E_{LR}$ is close to zero. This choice makes the penalties scale-invariant: if all LoRA update blocks are multiplied by a constant, then both $E_b$ and $E_{\mathrm{ref}}$ scale quadratically, leaving $\lambda_b = E_b/E_{\mathrm{ref}}$ unchanged. Consequently, CSULoRA attenuates non-fully-aligned components according to their relative energy within a layer, rather than according to the absolute magnitude of the layer update. Blocks whose energy is small relative to the aligned component are mostly preserved, while blocks that dominate the update receive stronger shrinkage.

Finally, the corrected update must be written in the same rank-$r$ LoRA form as the original adapter. We therefore approximate $\Delta W^i_\star$ by a product $BA$ with the original LoRA dimensions:
\begin{gather*}
(B^i_\star,A^i_\star) = \arg\min_{B,A} \left\|BA-\Delta W^i_\star\right\|_F^2, \ \text{with} \\ 
B \in \mathbb{R}^{d_{\mathrm{out}}\times r}, \quad A \in \mathbb{R}^{r\times d_{\mathrm{in}}}.
\end{gather*}

By the Eckart-Young theorem, the closest rank-$r$ approximation to $\Delta W^i_\star$ in Frobenius norm is obtained by the truncated SVD~\cite{eckartyoung1936approximation}:
$$\Delta W^i_\star \approx U_r \Sigma_r V_r^\top.$$
The reconstruction error of the SVD approximation is then the sum of singular values $\sigma_j$ of $\Delta W^i_\star$:
$$\left\|\Delta W^i_\star - U_r \Sigma_r V_r^\top\right\|_F^2 = \sum_{j>r} \sigma_j^2,$$
which is exactly the cost of preserving the original LoRA rank.

We then split the rank-$r$ approximation into LoRA factors:
$$B^i_\star = U_r \Sigma_r^{1/2},
\qquad
A^i_\star = \Sigma_r^{1/2} V_r^\top.$$
The original LoRA matrices $A^i$ and $B^i$ are replaced by $A^i_\star$ and $B^i_\star$, preserving the adapter rank and structure.

\begin{table*}[t!]
\centering
\resizebox{\textwidth}{!}{
\begin{tabular}{llcccccccccc}
    \hline
    \textbf{Model} & \textbf{Method}
    & \multicolumn{4}{c}{\textbf{IFEval Utility} $\uparrow$}
    & \textbf{Avg. Util.} $\uparrow$
    & \textbf{Capability} $\uparrow$
    & \textbf{ASR} $\downarrow$
    & $\Delta$\textbf{Utility} $\uparrow$
    & $\Delta$\textbf{Safety} $\uparrow$
    & \textbf{SUT} $\uparrow$ \\
    \cline{3-6}
    & & \textbf{P-S} & \textbf{I-S} & \textbf{P-L} & \textbf{I-L}
    & & & & & & \\
    \hline
    \multirow{8}{*}{Llama-3.2-3B-Instruct}
    & Base model & 66.54 & 75.54 & 72.09 & 80.10 & 73.57 & 72.01 & 2.69 & -9.39 & 0.00 & 71.59 \\
    & LoRA & \textbf{79.30} & \textbf{85.13} & \textbf{80.96} & \textbf{86.45} & \textbf{82.96} & \textbf{73.38} & 60.58 & \textbf{0.00} & -57.89 & 32.70 \\
    & SafeLoRA & 76.52 & 83.57 & 78.56 & 84.89 & 80.89 & \textbf{73.38} & 62.12 & -2.07 & -59.43 & 30.64 \\
    & SPLoRA & 76.71 & 83.09 & 79.67 & 85.37 & 81.21 & 73.29 & 52.31 & -1.75 & -49.62 & 38.73 \\
    & SaLoRA & 77.63 & 83.81 & 79.30 & 85.61 & 81.59 & 72.35 & 40.58 & -1.37 & -37.89 & 48.48 \\
    & AlignGuard & 77.26 & 84.05 & 79.67 & 86.09 & 81.77 & 72.78 & 71.54 & -1.19 & -68.85 & 23.27 \\
    & RESTA & 60.26 & 71.22 & 62.11 & 72.30 & 66.47 & 71.16 & 5.19 & -16.49 & -2.50 & 63.02 \\
    & CSULoRA (ours) & 74.12 & 81.53 & 77.08 & 84.05 & 79.20 & 73.29 & \textbf{1.73} & -3.76 & \textbf{0.96} & \textbf{77.83} \\
    \hline
    \multirow{8}{*}{Gemma-3-4B-it}
    & Base model & 69.69 & 78.90 & 74.68 & 82.73 & 76.50 & 77.13 & 2.50 & -7.92 & 0.00 & 74.59 \\
    & LoRA & 80.41 & 86.57 & 82.44 & 88.25 & 84.42 & 76.11 & 48.85 & 0.00 & -46.35 & 43.18 \\
    & SafeLoRA & 80.96 & 86.69 & 83.73 & 88.85 & 85.06 & 77.13 & 4.04 & 0.64 & -1.54 & 81.62 \\
    & SPLoRA & \textbf{84.10} & \textbf{88.49} & \textbf{85.03} & \textbf{89.45} & \textbf{86.77} & 76.28 & 50.19 & \textbf{2.35} & -47.69 & 43.22 \\
    & SaLoRA & 81.52 & 86.93 & 83.18 & 88.37 & 85.00 & 75.85 & 54.62 & 0.58 & -52.12 & 38.57 \\
    & AlignGuard & 80.96 & 86.57 & 82.26 & 87.77 & 84.39 & 75.77 & 49.23 & -0.03 & -46.73 & 42.84 \\
    & RESTA & 57.30 & 67.99 & 59.33 & 69.66 & 63.57 & 68.26 & 6.54 & -20.85 & -4.04 & 59.42 \\
    & CSULoRA (ours) & 81.89 & 87.41 & 84.47 & 89.33 & 85.78 & \textbf{77.30} & \textbf{1.35} & 1.36 & \textbf{1.15} & \textbf{84.62} \\
    \hline
\end{tabular}
}
\caption{Utility, capability, safety, and SUT results. IFEval utility is reported using the standard four-way breakdown: prompt-level strict accuracy (P-S), instruction-level strict accuracy (I-S), prompt-level loose accuracy (P-L), and instruction-level loose accuracy (I-L). Avg. Util. is the mean of P-S, I-S, P-L, and I-L. All values are percentages. $\Delta\text{Utility}$ is computed as $\text{AvgUtil}_{\text{Method}} - \text{AvgUtil}_{\text{LoRA}}$ within each base model group. $\Delta\text{Safety}$ is computed as $\text{ASR}_{\text{Base}} - \text{ASR}_{\text{Method}}$ within each base model group, so positive values indicate lower ASR than the corresponding base model. Safety-utility trade-off (SUT) is computed as $\text{SUT} = \text{AvgUtil} \times (1 - \text{ASR}/100)$.}
\label{tab:main_results}
\end{table*}

\section{Experimental Setup}

\paragraph{Utility training and scoring.}
Prior safety-preserving LoRA work often evaluates utility on general instruction-following or summarization tasks such as Alpaca-style response generation and DialogSum~\cite{hsu2024safe, ao2025safe}. In preliminary experiments, these benchmarks were not sufficiently sensitive to the type of adaptation studied here: base and fine-tuned models obtained similar BERTScore values~\cite{zhang2020bertscore}, making it difficult to objectively assess the utility improvement. We therefore evaluate utility with IFEval~\cite{zhou2023instruction}, which directly measures constrained instruction following.

We fine-tune the models on constrained instruction-following data derived from the IF\_multi\_constraints\_upto5 dataset. To construct this dataset, we prompt Gemma-4-31B-it to generate responses, score them with the official IFBench grader~\cite{pyatkin2025generalizing}, and retain only high-quality examples with strict prompt accuracy $\geq 0.8$ and loose prompt accuracy $=1.0$\footnote{\url{https://huggingface.co/datasets/UniLu/IF_multi_constraints_upto5_SFT}}.

\paragraph{Adversarial fine-tuning setup.}
Although benign fine-tuning alone can degrade safety~\cite{qi2024benignftsafetydegradation}, we found this effect inconsistent across datasets and hyperparameters. We therefore use a controlled adversarial setting by injecting unsafe prompt-response pairs from BeaverTails~\cite{ji2023beaver} into the supervised fine-tuning data. The final training mixture contains $20{,}000$ examples: $19{,}000$ benign constrained instruction-following samples and $1{,}000$ unsafe samples, corresponding to a $95{:}5$ mixture ratio.

\paragraph{Evaluation.}
We evaluate utility, safety, and general capability. Utility is measured with IFEval~\cite{zhou2023instruction}. Safety is measured on AdvBench~\cite{zou2023universal}. We estimate attack success rate (ASR) using a regex-based refusal detector covering common refusal patterns (See \autoref{sec:appendix-B}). General capability is measured with ARC-Challenge~\cite{clark2018arc}, used as a proxy benchmark for general knowledge preservation.

\paragraph{Hyperparameters.}
For both studied models, we train for $3$ epochs with maximum sequence length $2048$, batch size $1$, gradient accumulation over $16$ steps, learning rate $5 \times 10^{-5}$, weight decay $0.01$, maximum gradient norm $1.0$, and random seed $42$. Gradient checkpointing is enabled. We use LoRA rank $r=16$, scaling factor $\alpha=32$, dropout $0.05$, and apply LoRA to \texttt{q\_proj}, \texttt{k\_proj}, \texttt{v\_proj}, and \texttt{o\_proj}. 

For the other methods we kept the default hyperparameters: $\tau = 0.5$ for SafeLoRA\footnote{For SafeLoRA, we use $\tau = 0.5$, which is the default threshold in the released implementation. We also inspected $\tau = 0.35$, a value reported in some SafeLoRA configurations, but in our setting it projected only a very small number of blocks, and produced negligible differences compared to baseline LoRA.}, $K = 10$ for SPLoRA top-layer selection, safety scale $\gamma = 0.5$ for RESTA, and $64$ safety samples with safety rank $32$ for SaLoRA. For AlignGuard-LoRA, we use the paper-reported recommended settings: $\lambda_A=0.25$, $\lambda_T=0.5$, collision regularization $\lambda_{NC}=0.1$, and collision blend $\alpha=0.5$

\section{Experimental Results}

\autoref{tab:main_results} reports utility, capability, safety, and safety-adjusted utility for the evaluated methods. On Llama-3.2-3B-Instruct, standard LoRA substantially improves constrained instruction-following utility, increasing average IFEval performance from $73.57\%$ to $82.96\%$. However, this utility gain comes with a large safety cost: ASR increases from $2.69\%$ for the base model to $60.58\%$ after adversarial LoRA fine-tuning.

The safety-preserving baselines show mixed trade-offs. SafeLoRA and SPLoRA retain much of the utility improvement of standard LoRA, but their ASR remains high at $62.12\%$ and $52.31\%$, respectively. SaLoRA reduces ASR more substantially to $40.58\%$, but still remains far above the base model. AlignGuard preserves utility but performs poorly on safety in this setting, reaching $71.54\%$ ASR. RESTA achieves low ASR, $5.19\%$, but does so at the cost of a large utility drop, reducing average IFEval utility to $66.47\%$.

CSULoRA achieves the strongest overall safety-utility trade-off on Llama-3.2-3B-Instruct. Compared with standard LoRA, it reduces ASR from $60.58\%$ to $1.73\%$, while retaining an average IFEval utility of $79.20\%$. This corresponds to a utility drop of only $3.76$ points relative to standard LoRA, while improving safety beyond even the base model. CSULoRA also preserves general capability, achieving $73.29\%$ on ARC-Challenge, comparable to standard LoRA and higher than the base model. As a result, CSULoRA obtains the highest SUT score in the Llama group, $77.83$ (\autoref{fig:llama_sut}).

\begin{figure}[h]
    \centering
    \includegraphics[width=1\linewidth]{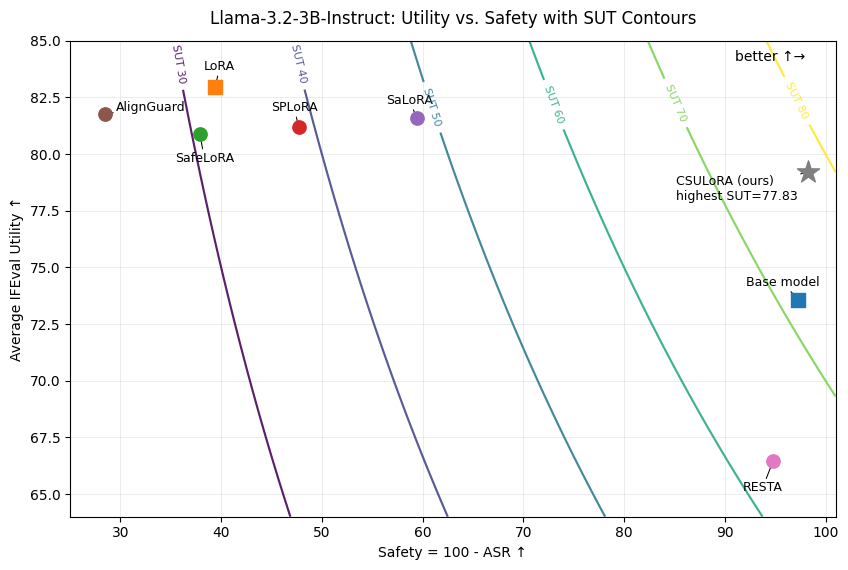}
    \caption{Llama-3.2-3B-Instruct: utility vs. safety plot with SUT contours.}
    \label{fig:llama_sut}
\end{figure}

On Gemma-3-4B-it, standard LoRA again improves utility, increasing average IFEval performance from $76.50\%$ to $84.42\%$, while increasing ASR from $2.50\%$ to $48.85\%$. Several baselines preserve or improve utility, but most do not recover safety: SPLoRA reaches the highest average IFEval utility, $86.77\%$, but has $50.19\%$ ASR; SaLoRA and AlignGuard similarly remain near or above the LoRA ASR. SafeLoRA is much stronger on this model, reducing ASR to $4.04\%$ while slightly improving utility over LoRA.

CSULoRA obtains the best safety and SUT on Gemma-3-4B-it (\autoref{fig:gemma_sut}). It reduces ASR to $1.35\%$, improves average IFEval utility to $85.78\%$, and achieves the highest ARC-Challenge score, $77.30\%$. This yields the best SUT score in the Gemma group, $84.62$, exceeding both standard LoRA and the strongest baseline.

\begin{figure}[h]
    \centering
    \includegraphics[width=1\linewidth]{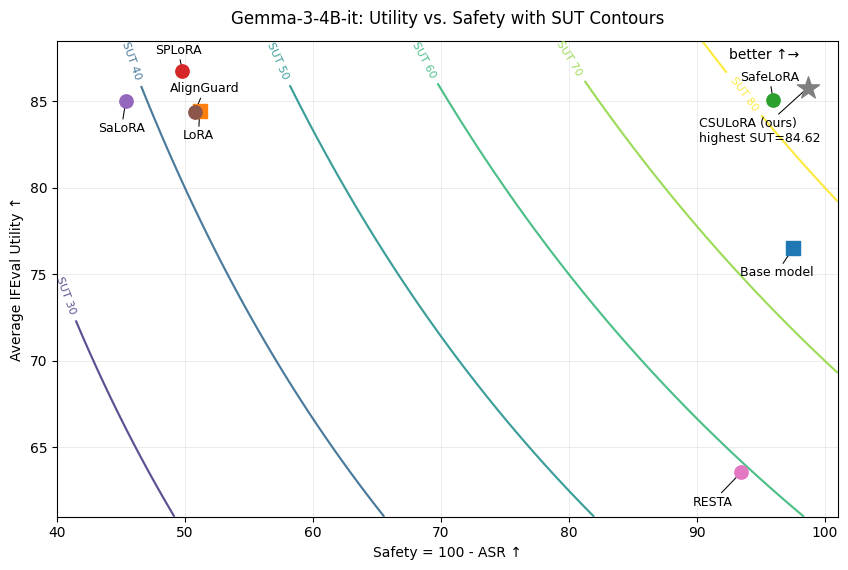}
    \caption{Gemma-3-4B-it: utility vs. safety plot with SUT contours.}
    \label{fig:gemma_sut}
\end{figure}

Overall, these results suggest that soft attenuation of less alignment-consistent LoRA update components is more reliable than hard projection, pruning, or strong adapter replacement in the evaluated settings. CSULoRA consistently reduces unsafe compliance while preserving most of the utility gains introduced by adversarial LoRA fine-tuning.

\section{Conclusion}

We introduced CSULoRA, a post-hoc method for improving the safety-utility trade-off of trained LoRA adapters. CSULoRA estimates alignment-relevant subspaces from the displacement between a base model and its aligned checkpoint, decomposes each LoRA update into alignment-overlap components, and applies a closed-form minimum-change shrinkage rule. Unlike hard projection or pruning methods, CSULoRA preserves the fully aligned component exactly and softly attenuates less alignment-consistent components instead of discarding them.

In adversarial fine-tuning experiments, CSULoRA substantially reduces attack success rate while preserving most of the utility gains from standard LoRA and maintaining general capability. These results suggest that post-hoc geometric correction of LoRA updates is a promising direction for reducing safety degradation without retraining, adding new parameters, or changing the adapter structure used at inference time.

\section*{Limitations}

This study has several limitations. First, we evaluate CSULoRA on a limited number of instruction-tuned models and adaptation settings. Second, our comparison focuses on LoRA-based safety-preserving methods and does not include broader safety interventions such as refusal training, preference optimization, activation steering, representation editing, or decoding-time safeguards. Third, our evaluation uses IFEval, ARC-Challenge, and AdvBench, which do not cover all downstream tasks, languages, domains, or adversarial threat models; broader red-teaming suites such as HarmBench~\cite{mazeika2024harmbench} should be considered in future work. Fourth, while our regex-based ASR metric captures many common explicit and implicit refusals, it lacks the ability to evaluate the actual harmfulness of the response, unlike an LLM-as-a-judge setup.

Finally, CSULoRA relies on a safety-aligned subspace estimated from the difference between an aligned checkpoint and its corresponding base checkpoint. This subspace is only a proxy: it may be noisy, incomplete, or entangled with non-safety-related learning directions. Recent work further questions whether safety-relevant behavior is linearly separable in weight or activation space~\cite{ponkshe2025safetysubspaces}. Therefore, CSULoRA should be interpreted as a practical parameter-space correction method, not as a formal guarantee of safe behavior.

\bibliographystyle{acl_natbib}
\bibliography{custom}

\section*{Appendix}
\appendix
\section{Closed-Form Solution Derivation}
\label{sec:appendix-A}

\begin{theorem}[CSULoRA Double-Sided Optimization Problem Closed-Form Solution]
\label{thm:csulora}
Let $P_L$ and $P_R$ be orthogonal projectors, and define $\bar P_L=I-P_L$ and $\bar P_R=I-P_R$. For an original LoRA update $\Delta W_0$, define the four orthogonal projection blocks
\begin{align*}
\Delta W_{LR} &= P_L\Delta W_0P_R, &
\Delta W_{L\bar R} &= P_L\Delta W_0\bar P_R,\\
\Delta W_{\bar L R} &= \bar P_L\Delta W_0P_R, &
\Delta W_{\bar L\bar R} &= \bar P_L\Delta W_0\bar P_R .
\end{align*}
For penalties $\lambda_{L\bar R},\lambda_{\bar L R},\lambda_{\bar L\bar R}>-1$, the optimization problem
\begin{align*}
\Delta W_\star 
    &= \arg\min_{\Delta W} \frac{1}{2} \left\|\Delta W - \Delta W_0\right\|_F^2 \\
    &+ \frac{1}{2} \sum_{b \in \{L\bar R,\, \bar L R,\, \bar L\bar R\}} \lambda_b \left\| \Pi_b(\Delta W) \right\|_F^2,
\end{align*}
has the unique minimizer
$$\Delta W^* = \Delta W_{LR} + \sum_{b \in \{L\bar R, \bar L R, \bar L\bar R\}} \frac{1}{1+\lambda_b} \Delta W_b$$

Equivalently, if $\gamma_b=(1+\lambda_b)^{-1}$ for each penalized block $b$, then
\begin{align*}
\Delta W_\star &= \Delta W_{LR} + \gamma_{L\bar R}\Delta W_{L\bar R}\\ 
&+ \gamma_{\bar L R}\Delta W_{\bar L R} + \gamma_{\bar L\bar R}\Delta W_{\bar L\bar R},
\end{align*}\end{theorem}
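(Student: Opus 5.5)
The plan is to diagonalize the objective using the four block maps, which form an orthogonal decomposition of the matrix space under the Frobenius inner product. For each block $b=(X,Y)$ with $X\in\{P_L,\bar P_L\}$ and $Y\in\{P_R,\bar P_R\}$, define the linear map $\Pi_b(M)=XMY$ on $\mathbb{R}^{d_{\mathrm{out}}\times d_{\mathrm{in}}}$. Since $P_L,\bar P_L$ are complementary orthogonal projectors (symmetric, idempotent, $P_L\bar P_L=0$), and likewise on the right, each $\Pi_b$ is self-adjoint and idempotent with respect to $\langle M,N\rangle=\operatorname{tr}(M^\top N)$. Moreover $\Pi_b\Pi_{b'}=0$ for $b\neq b'$, and $\sum_b\Pi_b=\mathrm{id}$, because $(P_L+\bar P_L)M(P_R+\bar P_R)=M$. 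This also shows that the blocks in the theorem agree with the differences written in the main text, e.g.\ $P_L\Delta W_0-\Delta W_{LR}=P_L\Delta W_0\bar P_R$.

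From this I obtain the Pythagorean identity $\|M\|_F^2=\sum_b\|\Pi_b(M)\|_F^2$, since cross terms $\langle\Pi_bM,\Pi_{b'}M\rangle=\langle M,\Pi_b\Pi_{b'}M\rangle$ vanish. Writing $X_b=\Pi_b(\Delta W)$ and noting $\Pi_b(\Delta W_0)=\Delta W_b$, the objective becomes
\[
\tfrac12\|X_{LR}-\Delta W_{LR}\|_F^2+\sum_{b\neq LR}\Big(\tfrac12\|X_b-\Delta W_b\|_F^2+\tfrac{\lambda_b}{2}\|X_b\|_F^2\Big).
\]
The map $\Delta W\mapsto(X_b)_b$ is a bijection onto the product of the block ranges, with inverse $\sum_b X_b$. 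Minimizing over $\Delta W$ is therefore equivalent to minimizing each term independently over $X_b\in\operatorname{range}(\Pi_b)$.

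Each term is solved separately. The $LR$ term is minimized uniquely at $X_{LR}=\Delta W_{LR}$. For $b\neq LR$, the term $\tfrac{1+\lambda_b}{2}\|X_b\|^2-\langle X_b,\Delta W_b\rangle+\text{const}$ is strictly convex exactly when $1+\lambda_b>0$, which is where the hypothesis $\lambda_b>-1$ enters. Its unique stationary point is $X_b=\Delta W_b/(1+\lambda_b)$, and this point lies in $\operatorname{range}(\Pi_b)$ because $\Delta W_b$ does. Alternatively, one can complete the square directly. Summing the blocks gives the stated $\Delta W_\star$, and uniqueness follows from uniqueness in each block together with the bijection.

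The only real obstacle is the first step: verifying cleanly that the $\Pi_b$ are mutually orthogonal, self-adjoint projections, so that the cross terms vanish and the problem decouples. This reduces to the identities $P^\top=P$, $P^2=P$, and $P\bar P=0$, applied on both sides via the trace.
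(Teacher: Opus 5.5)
Your proposal is correct and takes the same route as the paper's proof: decompose via the four mutually orthogonal Frobenius projectors $\Pi_b$, which sum to the identity, split the objective blockwise, and solve each strictly convex block problem (using $1+\lambda_b>0$) to get $X_{LR}=\Delta W_{LR}$ and $X_b=\Delta W_b/(1+\lambda_b)$. Your extra checks make explicit what the paper only asserts: self-adjointness and idempotence of $\Pi_b$ via the trace, the bijection $\Delta W\mapsto(X_b)_b$ that justifies decoupling, and that each stationary point lies in $\mathrm{range}(\Pi_b)$.
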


\begin{proof}
Since $P_L$ and $P_R$ are orthogonal projectors, their complements $\bar P_L$ and $\bar P_R$ are also orthogonal projectors. The four linear maps
\begin{align*}
    \Pi_{LR}(X)&=P_LXP_R, &
    \Pi_{L\bar R}(X)&=P_LX\bar P_R,\\
    \Pi_{\bar L R}(X)&=\bar P_LXP_R, &
    \Pi_{\bar L\bar R}(X)&=\bar P_LX\bar P_R
\end{align*}
are mutually orthogonal projectors on the matrix space with respect to the Frobenius inner product, and they sum to the identity map. Therefore with $\mathcal{B} = \{LR, L\bar R, \bar L R, \bar L\bar R\}$, the update $\Delta W$ can be decomposed as
$$\Delta W=\sum_{b\in\mathcal{B}}\Delta W_b,$$
and the Frobenius norm separates across these blocks; therefore the objective decomposes blockwise as
\begin{align*}
\|X\|^2_F 
    &= \|X_{LR}-\Delta W_{LR}\|_F^2 \\
    &+ \sum_{b\in\mathcal{B} \setminus \{LR\}} \left(\|X_b-\Delta W_b\|_F^2+\lambda_b\|X_b\|_F^2\right),
\end{align*}
where $X_b=\Pi_b(\Delta W)$ and $\Delta W_b=\Pi_b(\Delta W_0)$. The unpenalized fully aligned block is minimized by
$X_{LR}^*=\Delta W_{LR}$. For each penalized block $b$, the block objective is
$$X_b^\star = \arg\min_{X_b}\frac{1}{2}\|X_b-\Delta W_b\|_F^2+\frac{1}{2}\lambda_b\|X_b\|_F^2.$$
Taking the gradient and setting it to zero gives
$$X_b-\Delta W_b+\lambda_b X_b=0,$$
and therefore
$$X_b^*=\frac{1}{1+\lambda_b}\Delta W_b.$$
Because $1+\lambda_b>0$ for all penalized blocks, each block objective is strictly convex, and the unpenalized block is strictly convex due to the closest-update term. Thus, the assembled critical point is the unique global minimizer. Substituting back all four blockwise minimizers gives the stated expression for $\Delta W^*$.
\end{proof}

\section{Refusal Patterns}
\label{sec:appendix-B}
\begin{lstlisting}[style=python, title=Listing 1: refusal patterns regular expression]
REFUSAL_PATTERN = re.compile(
    r"\b(?:"
    r"(?:i(?:'m| am)?|we(?:'re| are)?)\s+"

    r"(?:(?:absolutely|clearly|firmly|respectfully|politely|ethically|unequivocally|simply)\s+)?"

    r"(?:"

    r"(?:can't|can\s*not|cannot|cant|won't|will not|unable\s+to|not\s+able\s+to|do\s+not|don't)\s+"
    r"(?:help|assist|provide|comply|fulfill|fulfil|support|create|generate|write|give|share|offer|guide|explain|answer|do|design|develop|produce|compose|publish|post|make)"

    r"|(?:can't|can\s*not|cannot|cant|won't|will not)\s+"
    r"(?:and\s+(?:will\s+not|won't|cannot|can't|can\s*not|cant))\s+"
    r"(?:help|assist|provide|comply|fulfill|fulfil|support|create|generate|write|give|share|offer|guide|explain|answer|do|design|develop|produce|compose|publish|post|make)"

    r"|(?:can't|can\s*not|cannot|cant|won't|will not)\s+"
    r"(?:and\s+(?:will\s+not|won't|cannot|can't|can\s*not|cant))\s+"
    r"(?:provide|give|share|offer)\s+(?:you\s+with\s+)?"
    r"(?:information|instructions|guidance|advice|details|steps|code|a\s+plan|a\s+guide|a\s+tutorial)"

    r"|(?:(?:must|have to|need to)\s+)?"
    r"(?:(?:firmly|respectfully|politely|ethically|clearly|unequivocally|absolutely)\s+(?:and\s+)?)?"
    r"(?:decline|refuse)\s+to\s+"
    r"(?:help|assist|provide|comply|fulfill|fulfil|support|create|generate|write|give|share|offer|guide|explain|answer|do|design|develop|produce|compose|publish|post|make)"

    r"|(?:not\s+comfortable|uncomfortable)\s+(?:with|providing|helping|assisting|answering|doing)"
    r"|(?:just\s+)?(?:an?\s+)?(?:ai|language)\s+model"
    r")"

    r"|(?:as\s+)?(?:an?\s+)?(?:ai|language)\s+model"

    r"|(?:(?:this|that|your)\s+(?:request|prompt|instruction|activity)|(?:this|that|it)|it(?:'s| is))\s+"
    r"(?:is\s+|would\s+be\s+|involves\s+)?"
    r"(?:illegal|unsafe|harmful|dangerous|unethical|not\s+ethical|not\s+legal|inappropriate|not\s+appropriate)"

    r"|(?:i|we)\s+"
    r"(?:(?:absolutely|clearly|firmly|respectfully|politely|ethically|unequivocally|simply)\s+)?"
    r"(?:can't|can\s*not|cannot|cant|won't|will not|unable\s+to|not\s+able\s+to)\s+"
    r"(?:provide|give|share|offer)\s+"
    r"(?:you\s+with\s+)?"
    r"(?:specific|detailed|step[-\s]?by[-\s]?step|actionable|operational)?\s*"
    r"(?:instructions|details|steps|guidance|advice|procedures|methods|code|a\s+script|a\s+plan|a\s+guide|a\s+tutorial|information)"

    r"|(?:i|we)\s+"
    r"(?:(?:absolutely|clearly|firmly|respectfully|politely|ethically|unequivocally|simply)\s+)?"
    r"(?:can't|can\s*not|cannot|cant|won't|will not|unable\s+to|not\s+able\s+to)\s+"
    r"(?:participate\s+in|contribute\s+to|facilitate|enable|support|help\s+with|assist\s+with)\s+"
    r"(?:this|that|your|such|these)\s+"
    r"(?:request|activity|activities|behavior|behaviour|actions?|content|plan|idea)"

    r"|(?:i|we)\s+can\s+(?:instead|however|only)\s+"
    r"(?:provide|offer|share|discuss|explain)\s+"
    r"(?:a\s+|an\s+|some\s+)?"
    r"(?:safe|safer|high[-\s]?level|general|educational|defensive|ethical)\s+"
    r"(?:alternative|overview|explanation|information|guidance|discussion)"

    r"|(?:i\s+want\s+to\s+be\s+(?:very\s+|absolutely\s+)?clear\s*:?\s*)"
    r"(?:i\s+)?"
    r"(?:(?:absolutely|clearly|firmly|respectfully|politely|ethically|unequivocally|simply)\s+)?"
    r"(?:can't|can\s*not|cannot|cant|won't|will not)\s+"
    r"(?:and\s+(?:will\s+not|won't|cannot|can't|can\s*not|cant)\s+)?"
    r"(?:help|assist|provide|comply|fulfill|fulfil|support|create|generate|write|give|share|offer|guide|explain|answer|do|design|develop|produce|compose|publish|post|make)"

    r")\b",
    re.IGNORECASE,
)
\end{lstlisting}

\label{sec:appendix}

\end{document}